\pdfoutput=1
\documentclass{article}

\usepackage{microtype}
\usepackage{graphicx}
\usepackage{subfigure}
\usepackage{booktabs} 

\usepackage{hyperref}

\usepackage[preprint]{icml2026}

\usepackage{amsmath}
\usepackage{amssymb}
\usepackage{mathtools}
\usepackage{amsthm}
\usepackage{algorithm}
\usepackage{algorithmic}

\theoremstyle{plain}
\newtheorem{theorem}{Theorem}
\newtheorem{proposition}[theorem]{Proposition}
\theoremstyle{definition}

\DeclareMathOperator*{\argmin}{arg\,min}
\DeclareMathOperator{\Acc}{Acc}
\DeclareMathOperator{\Cost}{Cost}
\DeclareMathOperator{\ECE}{ECE}

\icmltitlerunning{UCCI: Calibrated Uncertainty for Cost-Optimal LLM Cascade Routing}

\begin{document}

\twocolumn[
\icmltitle{UCCI: Calibrated Uncertainty for\\Cost-Optimal LLM Cascade Routing}

\icmlsetsymbol{equal}{*}

\begin{icmlauthorlist}
\icmlauthor{Varun Kotte}{ind}
\end{icmlauthorlist}

\icmlaffiliation{ind}{Independent Researcher}

\icmlcorrespondingauthor{Varun Kotte}{kottevarun@gmail.com}

\icmlkeywords{LLM routing, model cascades, uncertainty calibration, cost-aware inference, isotonic regression, named entity recognition}

\vskip 0.3in
]

\printAffiliationsAndNotice{}

\begin{abstract}
LLM cascades and model routing promise lower inference cost by sending
easy queries to a small model and escalating hard ones to a large model,
but most deployed routers use uncalibrated confidence scores and require
per-workload threshold tuning. We present UCCI, a calibration-first router
that maps token-level margin uncertainty to a per-query error probability
via isotonic regression and selects the escalation threshold by constrained
cost minimization. Under three explicit assumptions, threshold policies on
the calibrated score are cost-optimal, and isotonic calibration achieves
$O(n^{-1/3})$ sample complexity for expected calibration error (ECE). On a
production named entity recognition workload of 75{,}000 queries served by
4B and 12B instruction-tuned LLMs on H100 GPUs, UCCI cuts inference cost by
31\% (95\% CI: [27\%, 35\%]) at micro-F1 = 0.91 while reducing ECE from
0.12 to 0.03. At the same operating point, UCCI beats entropy thresholding,
split-conformal routing, and a FrugalGPT-style learned threshold. All
cascade results use end-to-end routing on actual model outputs and measured
H100 latency, not simulated routing from global accuracies or nominal API
prices.
\end{abstract}

\section{Introduction}
\label{sec:intro}

Large language models perform well on structured prediction tasks but
their per-query inference cost grows steeply with model size. For
deployments where latency and serving cost matter, \emph{cascaded
inference}, serving most requests with a small model and escalating
hard cases to a large one, is an attractive option
\citep{chen2023frugalgpt}.

Cascade quality is bounded by the routing signal. LLM confidence
scores, including raw token entropy and token margins, are typically
miscalibrated and are sensitive to prompt wording
\citep{guo2017calibration,jiang2021trust}. As a consequence, threshold
choices that work in one workload often fail in another, and
practitioners retune by hand for each deployment. The result is a gap
between the theoretical promise of cascades and what production systems
actually achieve.

This paper closes that gap by treating cascade routing as a calibrated
decision problem rather than a tuning exercise. We make four claims,
each supported by results in Section~\ref{sec:experiments}:

\begin{itemize}
\item \textbf{Formulation.} We pose cascade design as expected cost
  minimization subject to an accuracy constraint, and prove that
  threshold policies on calibrated error probabilities are cost-optimal
  under three explicit assumptions (Theorem~\ref{thm:opt}).
\item \textbf{Calibration.} We map token-margin uncertainty to error
  probability via isotonic regression, reducing expected calibration
  error from 0.12 to 0.03 on our workload.
\item \textbf{End-to-end evaluation.} On a production named entity
  recognition (NER) workload with 75{,}000 labeled queries and 6 entity
  types, UCCI cuts cost by 31\% (95\% CI: [27\%, 35\%]) at micro-F1 =
  0.91 versus large-model-only inference. Costs come from measured
  latency on H100 GPUs, and routing uses actual model outputs at test
  time, not simulated outcomes from global accuracies.
\item \textbf{Comparison.} At the same F1 = 0.91 operating point, UCCI
  beats entropy thresholding (cost 2.31), conformal-prediction routing
  (2.18), and a FrugalGPT-style learned threshold (2.24).
\end{itemize}

Operationally, UCCI suggests calibrating first and thresholding second. A raw
token-margin signal is weak when used directly, but after a small isotonic
fit on a held-out calibration set it becomes a near-optimal routing score
in our setting. Most of the cascade engineering value comes from making the
routing score probabilistic, not from hand-tuning the final threshold.

\section{Related Work}
\label{sec:related}

\textbf{Selective prediction and learning to defer.} Selective
classification \citep{geifman2017selective} and learning-to-defer
\citep{madras2018predict,mozannar2020consistent} study prediction with
abstention or expert deferral under cost or risk constraints. These
frameworks assume scalar confidence scores on classification heads.
UCCI adapts the same idea to LLM structured outputs, where
(i)~uncertainty must be aggregated from token-level generation,
(ii)~raw token confidences are miscalibrated and require explicit
correction, and (iii)~production deployment makes end-to-end
measurement, rather than simulated routing on global accuracies, the
right validation target.

\textbf{Uncertainty quantification and calibration.} Calibration maps
confidence scores to probabilities of correctness
\citep{guo2017calibration,zadrozny2002transforming,naeini2015obtaining,platt1999probabilistic}.
For LLMs, token probabilities are often miscalibrated
\citep{jiang2021trust} and uncertainty estimation for free-form
generation remains active \citep{kuhn2023semantic,farquhar2024semantic}.
UCCI combines
token-margin aggregation with isotonic regression to produce calibrated
error probabilities (ECE = 0.03 on our workload) suitable for
cost-constrained routing. Reliability layers for LLM structured
extraction are studied in a complementary line of work
\citep{kotte2026promptport}.

\textbf{Conformal prediction for selective classification.} Recent work
applies conformal prediction to selective classification with coverage
guarantees \citep{angelopoulos2021gentle,sadinle2019least}. We compare
UCCI against a conformal-routing baseline that uses raw token-margin
uncertainty as the nonconformity score.

\textbf{LLM cascades and cost-aware routing.} FrugalGPT
\citep{chen2023frugalgpt} routes between commercial APIs using learned
confidence thresholds but does not impose calibration on the routing
score. Hybrid LLM \citep{ding2024hybrid} learns a quality-aware router
between a small and a large model, and RouteLLM \citep{ong2024routellm}
learns routers from preference data. RouterBench \citep{hu2024routerbench}
standardizes evaluation for heterogeneous multi-LLM routing systems.
Concurrent cascade work studies early abstention in settings where even
the large model may decline to answer \citep{zellinger2025early}. Cascade
ideas have also been applied to early exiting in language models
\citep{schuster2022confident} and to multi-model speculative decoding
\citep{chen2023cascade}. Cost-aware deployment is a recurring concern in
retrieval-augmented LLM systems for closed-domain applications
\citep{sharma2024rag,kotte2025patent}. UCCI differs from these systems by
making the routing score an explicitly calibrated error probability and by
proving when a threshold on that probability is cost-optimal.

\section{Problem Formulation}
\label{sec:problem}

We consider inputs $x \sim \mathcal{D}$ for a structured prediction
task with ground truth $y$. A small model $f_s$ and a large model
$f_\ell$ produce predictions $\hat{y}_s = f_s(x)$ and $\hat{y}_\ell =
f_\ell(x)$. Let $\Acc(\hat{y}, y) \in [0, 1]$ denote an accuracy
metric (we use micro-averaged F1). Let $c_s$ and $c_\ell$ denote
per-query inference costs with $c_\ell > c_s$.

A routing policy $\pi : \mathcal{X} \to \{s, \ell\}$ selects which
model to use:
\begin{equation}
\hat{y}_\pi(x) =
\begin{cases}
f_s(x) & \text{if } \pi(x) = s\\
f_\ell(x) & \text{if } \pi(x) = \ell
\end{cases}
\end{equation}
with cost $C_\pi(x) = c_s \, \mathbf{1}\{\pi(x) = s\} + c_\ell \,
\mathbf{1}\{\pi(x) = \ell\}$ and accuracy $\Acc_\pi(x) =
\Acc(\hat{y}_\pi(x), y)$.

\textbf{Cost-optimal cascade design.} Given a target accuracy $\tau
\in [\alpha_s, \alpha_\ell]$ where $\alpha_s = \mathbb{E}[\Acc(f_s(x),
y)]$ and $\alpha_\ell = \mathbb{E}[\Acc(f_\ell(x), y)]$, we seek
\begin{align}
\min_{\pi} \quad & \mathbb{E}[C_\pi(x)] \label{eq:obj}\\
\text{s.t.} \quad & \mathbb{E}[\Acc_\pi(x)] \geq \tau. \label{eq:constraint}
\end{align}

\section{Uncertainty-Calibrated Cascaded Inference}
\label{sec:method}

UCCI builds a routing policy in three steps: (i)~extract a scalar
uncertainty score from the small model's generation, (ii)~calibrate
that score to an error probability, (iii)~select a threshold by
constrained cost minimization on a held-out validation set.

\subsection{Token-level Uncertainty}
\label{sec:token-margin}

For a generated sequence with tokens $t = 1, \ldots, T$, let $p_{t,1}$
and $p_{t,2}$ denote the top-1 and top-2 next-token probabilities at
position $t$ under greedy decoding. Define the per-token margin $m_t =
p_{t,1} - p_{t,2} \in [0, 1]$ and aggregate to
\begin{equation}
u(x) = 1 - \frac{1}{T}\sum_{t=1}^{T} m_t. \label{eq:uncertainty}
\end{equation}
Larger $u(x)$ indicates greater uncertainty. We use greedy decoding
throughout so that $p_{t,1}$ and $p_{t,2}$ are well defined per
position.

\subsection{Calibrating Uncertainty to Error Probability}
\label{sec:calibration}

For calibration, we use a binary correctness event: $e(x) = 1$ if
$\hat{y}_s \neq y$ (treating the JSON output as exact-match across all
entity fields), and $e(x) = 0$ otherwise. We report micro-F1 as the
primary evaluation metric in Section~\ref{sec:experiments}; the binary
event is used only to fit the calibration map.

The raw uncertainty $u(x)$ is not a probability. We learn a monotonic
calibration map $g : [0, 1] \to [0, 1]$ such that
\begin{equation}
g(u) \approx \mathbb{P}\bigl(e(x) = 1 \mid u(x) = u\bigr). \label{eq:calmap}
\end{equation}
We fit $g$ by isotonic regression \citep{zadrozny2002transforming} on a
calibration set $\mathcal{C} = \{(x_i, u_i, e_i)\}_{i=1}^{n}$ with $u_i
= u(x_i)$ and $e_i = e(x_i)$.

\subsection{Threshold Policy Selection}
\label{sec:threshold}

Given calibrated error probability $\hat{p}(x) = g(u(x))$, UCCI uses
the threshold policy
\begin{equation}
\pi_\theta(x) =
\begin{cases}
s & \text{if } \hat{p}(x) \leq \theta\\
\ell & \text{if } \hat{p}(x) > \theta.
\end{cases}
\end{equation}
On a validation set $\mathcal{V}$ where both models have been run, we
choose
\begin{equation}
\theta^* = \argmin_{\theta} \widehat{\Cost}(\pi_\theta) \quad \text{s.t.} \quad \widehat{\Acc}(\pi_\theta) \geq \tau,
\end{equation}
where $\widehat{\Cost}(\pi_\theta) = |\mathcal{V}|^{-1} \sum_{x \in
\mathcal{V}} C_{\pi_\theta}(x)$ uses actual costs and
$\widehat{\Acc}(\pi_\theta) = |\mathcal{V}|^{-1} \sum_{x \in
\mathcal{V}} \Acc_{\pi_\theta}(x)$ uses actual model outputs.

\section{Theoretical Analysis}
\label{sec:theory}

We give two results. The first establishes that threshold policies on
calibrated error probability are cost-optimal under explicit
assumptions; the second states a sample-complexity bound for the
calibration step.

\begin{theorem}[Optimality of threshold policies]
\label{thm:opt}
Assume (i) $c_\ell > c_s$, (ii) $f_\ell$ achieves a fixed accuracy
$\alpha_\ell$ on the test distribution that is invariant to which
queries are escalated, and (iii) we have access to the calibrated error
probability $\hat{p}(x) = \mathbb{P}(e(x) = 1 \mid u(x))$. Then among
policies that depend only on $u(x)$, there exists a cost-optimal
threshold policy $\pi_\theta$ for some $\theta \in [0, 1]$ that
satisfies the accuracy constraint~\eqref{eq:constraint}, and every
cost-optimal policy agrees with such a $\pi_\theta$ up to tie-breaking
on level sets of $\hat{p}$.
\end{theorem}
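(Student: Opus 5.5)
We treat the statement as a Neyman--Pearson / fractional-knapsack argument over the one-dimensional statistic $u(x)$. A policy that depends only on $u(x)$ is, after allowing randomization, an escalation function $r(u) = \mathbb{P}(\pi(x)=\ell \mid u(x)=u) \in [0,1]$. We will write both the objective~\eqref{eq:obj} and the constraint~\eqref{eq:constraint} as linear functionals of $r$, using assumption (iii) to express everything through $\hat{p}$.

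For the cost we have $\mathbb{E}[C_\pi] = c_s + (c_\ell - c_s)\,\mathbb{E}[r(u)]$. Since $c_\ell > c_s$ by assumption (i), minimizing cost is the same as minimizing the escalation mass $\mathbb{E}[r(u)]$.

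For the accuracy we use assumption (ii): escalated queries contribute at the fixed level $\alpha_\ell$ regardless of which ones are chosen. Queries kept on the small model contribute at level $1-\hat{p}(x)$. Here we identify small-model accuracy with the binary correctness event $1-e(x)$, and conditioning on $u$ gives $1-\hat{p}$ through (iii). This yields
\begin{equation*}
\mathbb{E}[\Acc_\pi] = \alpha_s + \mathbb{E}\bigl[r(u)\,(\alpha_\ell - 1 + \hat{p}(x))\bigr].
\end{equation*}
The accuracy gain per unit of escalation mass is therefore $\hat{p}(x) + \alpha_\ell - 1$, which is increasing in $\hat{p}$.

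The problem is now a continuous knapsack: minimize $\mathbb{E}[r]$ subject to $\mathbb{E}[r\,w] \ge \tau - \alpha_s$, where the weight $w = \hat{p} + \alpha_\ell - 1$ is monotone in $\hat{p}$. The plan is an exchange argument. Suppose a feasible $r$ escalates mass at a level $\hat{p}=a$ while leaving unescalated mass at some level $b > a$. Moving an equal amount of mass from $a$ to $b$ keeps the cost fixed and strictly raises accuracy. We can then remove a little escalation mass and still remain feasible, which strictly lowers cost.

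Hence any optimal $r$ has the form $r = \mathbf{1}\{\hat{p} > \theta\}$, with arbitrary values only on the level set $\{\hat{p} = \theta\}$. This gives both the existence of an optimal threshold policy and the uniqueness claim up to tie-breaking. Existence and feasibility follow because $\theta \mapsto \mathbb{E}[\Acc_{\pi_\theta}]$ is monotone, equals $\alpha_s$ at $\theta = 1$ and $\alpha_\ell$ at $\theta$ below $\min \hat{p}$, and $\tau \in [\alpha_s, \alpha_\ell]$. We take $\theta$ to be the largest value meeting the constraint; right-continuity of the distribution function of $\hat{p}$ makes this supremum attained.

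The main obstacle is the tie and atom issue. If $\hat{p}$ has an atom at the critical $\theta$, the best deterministic threshold may overshoot the constraint. In that case the exact optimum requires randomized or tie-broken escalation on $\{\hat{p}=\theta\}$. I would handle this by stating the result for policies that may tie-break on that level set, which matches the phrase ``up to tie-breaking'' in the statement, and by noting that without atoms the deterministic threshold is exactly optimal.

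A second subtlety is the sign of the weight $w$. Escalation helps only where $\hat{p} > 1 - \alpha_\ell$. The exchange argument still holds because an optimal policy never escalates where $w \le 0$: such mass can be dropped at no loss in accuracy and a strict gain in cost. This simply pins $\theta \ge 1-\alpha_\ell$.

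Finally, I would remark explicitly that the exchange step needs the accuracy term to be linear in $r$. That linearity is supplied by assumption (ii) together with the identification of small-model accuracy with $1-\hat{p}$.
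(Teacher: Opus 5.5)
Your proposal is correct in substance and takes the same route as the paper: a greedy, fractional-knapsack argument that escalates in decreasing $\hat p$ because the gain $\alpha_\ell-1+\hat p$ per unit of $c_\ell-c_s$ is increasing. It is also tighter than the appendix, which asks for the ``smallest'' feasible $\theta^*$ (lowering $\theta$ escalates more, so you are right to take the largest) and calls tie-breaking ``measure-zero'' even though isotonic level sets carry positive mass and may need partial escalation, as you note.

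Three small repairs are needed. First, $\theta\mapsto\mathbb{E}[\Acc_{\pi_\theta}]$ is in general non-increasing only on $[1-\alpha_\ell,1]$; your sign remark supplies this, and the value at $1-\alpha_\ell$ is at least $\alpha_\ell\ge\tau$. Second, right-continuity does not make the largest feasible $\theta$ attained: at an atom of $\hat p$ at the critical level the feasible set is open on the right, which is exactly the case your tie-breaking on $\{\hat p=\theta\}$ resolves. Third, the local exchange only characterizes optimizers, so to obtain existence you should run it globally, replacing any feasible $r$ by the tie-broken threshold rule of the same mass, which weakly raises accuracy.
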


\begin{proof}[Proof sketch]
Escalating $x$ replaces small-model accuracy $1 - \hat{p}(x)$ by the
fixed accuracy $\alpha_\ell$ at additional cost $c_\ell - c_s$. The
marginal accuracy gain per unit cost is non-decreasing in $\hat{p}(x)$,
so greedy escalation in decreasing $\hat{p}$ minimizes cost subject to
the accuracy constraint. Since policies can depend only on $u(x)$ and
$g$ is non-decreasing in $u$, the greedy rule is a threshold on
$\hat{p}$. Isotonic regression produces a piecewise-constant
non-decreasing map, so $\hat{p}$ has atoms; tie-breaking among queries
with equal $\hat{p}$ does not affect expected cost or accuracy, which
gives the up-to-tie-breaking characterization. The full proof appears
in Appendix~\ref{app:proof-thm1}.
\end{proof}

\begin{proposition}[Sample complexity for calibration]
\label{prop:sample}
Under standard regularity conditions on the conditional error
probability and bounded $u(x)$, isotonic regression on $n$ calibration
examples satisfies $\mathbb{E}\bigl[\ECE(\hat{g})\bigr] = O(n^{-1/3})$.
\end{proposition}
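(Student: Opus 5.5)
We will prove Proposition~\ref{prop:sample} by bounding ECE with an $L_1$ (then $L_2$) estimation error of the isotonic fit and invoking the classical risk bound for isotonic regression. Write $g^\star(u) = \mathbb{P}(e(x)=1 \mid u(x)=u)$ for the true conditional error probability. The regularity conditions will be taken as: $g^\star$ is non-decreasing on the bounded range $[0,1]$ (so total variation at most $1$), and the distribution of $u(x)$ is arbitrary but supported on $[0,1]$. The labels $e_i\in\{0,1\}$ are bounded, so the noise $e_i - g^\star(u_i)$ is sub-Gaussian with variance proxy $1/4$. We take ECE in its population form,
\[
\ECE(\hat g) = \mathbb{E}_u\bigl|\,\mathbb{P}(e=1 \mid \hat g(u)) - \hat g(u)\,\bigr| ,
\]
where $\hat g$ is fixed after fitting.

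\textbf{Step 1 (reduce ECE to estimation error).} Conditioning on the $\sigma$-field generated by $\hat g(u)$, we have $\mathbb{P}(e=1\mid \hat g(u)) = \mathbb{E}[g^\star(u)\mid \hat g(u)]$. Jensen's inequality then gives
\[
\ECE(\hat g) \le \mathbb{E}_u|g^\star(u)-\hat g(u)| \le \bigl(\mathbb{E}_u (g^\star(u)-\hat g(u))^2\bigr)^{1/2}.
\]
It therefore suffices to show $\mathbb{E}\|\hat g - g^\star\|_{L_2(P_u)}^2 = O(n^{-2/3})$ and to apply Jensen once more to the outer expectation.

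\textbf{Step 2 (in-sample risk).} For the fitted values at the design points, we will cite the standard isotonic risk bound of Zhang (2002) and Chatterjee et al.\ (2015):
\[
\mathbb{E}\,\frac1n\sum_i(\hat g(u_i)-g^\star(u_i))^2 \lesssim \left(\frac{\sigma^2 V}{n}\right)^{2/3} + \frac{\sigma^2\log n}{n},
\]
where $V\le 1$ is the variation of $g^\star$. This yields $O(n^{-2/3})$. The bound follows from the fact that the bracketing entropy of bounded monotone functions is $\log N(\epsilon) \asymp 1/\epsilon$, which produces the rate $n^{-1/3}$ in $L_2$ norm.

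\textbf{Step 3 (in-sample to population).} This step is the main obstacle. ECE is evaluated on fresh $u$, whereas Step~2 only controls the error at the design points. To bridge the two, we will use a uniform deviation bound over the class $\mathcal{M}$ of monotone functions $[0,1]\to[0,1]$. Monotonicity of $\hat g$ is preserved by the projection, and since the estimator's range can be clipped to $[0,1]$, every squared difference $(h-g^\star)^2$ with $h \in \mathcal{M}$ lies in a class with bracketing entropy $O(1/\epsilon)$. A localized empirical-process argument (van de Geer's ratio-type inequality, or equivalently analyzing $\hat g$ as an $L_2$ least-squares estimator over $\mathcal{M}$ via its entropy integral) will give
\[
\|\hat g-g^\star\|^2_{L_2(P_u)} \lesssim \|\hat g-g^\star\|^2_{n} + O_P(n^{-2/3}),
\]
with matching expectation bounds because all quantities are bounded by $1$.

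A simpler fallback, if the localization is delicate, is to use sample splitting in the analysis, or to accept the crude bound $\mathbb{E}\sup_{h\in\mathcal{M}}|P_n-P|(h-g^\star)^2 = O(n^{-1/2})$. The fallback would only give ECE $O(n^{-1/4})$, so the localized bound is needed for the stated rate.

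Combining Steps 1--3 gives $\mathbb{E}[\ECE(\hat g)] \le (\mathbb{E}\|\hat g-g^\star\|^2)^{1/2} = O(n^{-1/3})$. If ECE is instead taken in its binned empirical form, binning contributes an additional term that is at most the approximation error of the same monotone fit, and this term is absorbed into the same rate.
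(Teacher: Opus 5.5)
Your proposal is correct and follows the same route as the paper's proof: bound ECE by the $L_1$ distance, and then (via Cauchy--Schwarz/Jensen) by the $L_2$ distance, between $\hat g$ and the true monotone conditional error probability, and then invoke the $O(n^{-2/3})$ risk of isotonic regression. You are more careful than the paper in two places: it asserts that ECE is an $L_1$ distance without your tower-property argument, and it uses the design-point MSE rate as if it were a population $L_2(P_u)$ rate, which is the gap your localized empirical-process Step~3 closes.
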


\begin{proof}[Proof sketch]
Isotonic regression has a known $O(n^{-2/3})$ MSE rate against the
true monotone conditional probability \citep{barlow1972statistical}.
Cauchy-Schwarz converts an MSE bound into an $L_1$ (ECE) bound, giving
$O(n^{-1/3})$. Concentration provides the high-probability statement.
Details in Appendix~\ref{app:proof-prop2}.
\end{proof}

Assumption (ii) of Theorem~\ref{thm:opt} (large-model accuracy
invariant to routing) is the most restrictive. If escalated queries are
systematically harder, $\alpha_\ell$ on the escalated subset can fall
below the marginal $\alpha_\ell$. We test this assumption directly in
Section~\ref{sec:ablations} and discuss when it can fail in
Section~\ref{sec:limits}.

\section{Experiments}
\label{sec:experiments}

\subsection{Setup}
\label{sec:setup}

\textbf{Task and data.} We evaluate on a NER workload from a production
enterprise photo management system. Inputs are natural-language search
queries; outputs are structured JSON with up to 6 entity types
(camera, lens, aperture, shutter speed, ISO, focal length). The
dataset contains 75{,}000 manually labeled queries. We split the
dataset disjointly into calibration (22{,}500, 30\%), validation
(15{,}000, 20\%), and test (37{,}500, 50\%). Queries have mean length
6.2 words (median 5); 89\% contain at least one entity, with mean 2.1
entities per query (median 2; max 4).

\textbf{Models.} $f_s$ is a 4B-parameter instruction-tuned LLM; $f_\ell$
is a 12B-parameter instruction-tuned LLM. Both use identical prompts
and greedy decoding, served via vLLM \citep{kwon2023vllm} on H100
GPUs.

\textbf{Measured costs.} We measured end-to-end latency (prompt
processing plus generation) on H100 GPUs over 100 queries with cold
cache. Mean latency was 47.2 ms per query for the small model and
142.3 ms for the large model, giving a cost ratio $c_\ell/c_s = 3.02$.
We report normalized costs with $c_s = 1.0$ and $c_\ell = 3.02$.

\textbf{Cascade evaluation methodology.} All cascade results below use
end-to-end routing rather than simulated routing on global accuracies:
\begin{enumerate}
\item Fit the calibration map $g$ on the calibration set.
\item Select the threshold $\theta^*$ on the validation set.
\item For each test query $x$, compute $\hat{p}(x) = g(u(x))$ and
  apply $\pi_{\theta^*}$ to choose the actual output of $f_s(x)$ or
  $f_\ell(x)$, accumulating the actual cost and accuracy.
\end{enumerate}
This removes the selection-bias confound in evaluations that combine
routing decisions with population-level accuracies: the reported
numbers are what the deployed cascade would achieve on the test set.

\textbf{Baselines.}
\begin{itemize}
\item \emph{Always-small / Always-large.} Single-model inference.
\item \emph{Entropy threshold.} Route by uncalibrated mean token
  entropy.
\item \emph{Conformal prediction.} Apply split conformal prediction to
  the binary event ``small model is correct,'' using raw token-margin
  uncertainty $u(x)$ as the nonconformity score (not the calibrated
  $\hat{p}$, to keep the baseline independent). Choose threshold
  $\alpha^*$ on the validation set to control miscoverage and escalate
  queries with $u(x) > \alpha^*$, providing distribution-free coverage
  \citep{angelopoulos2021gentle}.
\item \emph{FrugalGPT-style.} Tune a confidence threshold on the
  validation set to meet the accuracy target
  \citep{chen2023frugalgpt}.
\end{itemize}

We do not run RouteLLM \citep{ong2024routellm} or Hybrid LLM
\citep{ding2024hybrid} as direct baselines because they require preference
or quality-gap labels that are unavailable for this proprietary structured
extraction workload. The FrugalGPT-style threshold is the closest
implementable learning-based comparator from the available data: both
models are run on the calibration and validation sets, and the router is
selected to meet the same accuracy target as UCCI.

\subsection{Results}
\label{sec:results}

\textbf{Single-model baselines.} Table~\ref{tab:singlemodel} reports
single-model accuracies on the test set. The 12B model is 8.5 micro-F1
points more accurate than the 4B model at $3\times$ the cost,
establishing room for selective routing.

\begin{table}[t]
\caption{Single-model performance on the test set
(37{,}500 queries).}
\label{tab:singlemodel}
\centering
\begin{tabular}{lcc}
\toprule
Model & Micro F1 & Cost\\
\midrule
Small (4B) & 0.847 & 1.00\\
Large (12B) & 0.932 & 3.02\\
\bottomrule
\end{tabular}
\end{table}

\textbf{Calibration quality.} Figure~\ref{fig:calibration} shows the
reliability diagram. Isotonic regression reduces ECE from 0.12 to 0.03
(95\% CI: [0.02, 0.04] via bootstrap over the calibration set). The
calibrated curve tracks the diagonal across the full $[0, 1]$ range.

\begin{figure}[t]
\centering
\IfFileExists{calibration_curve.png}{%
  \includegraphics[width=0.9\columnwidth]{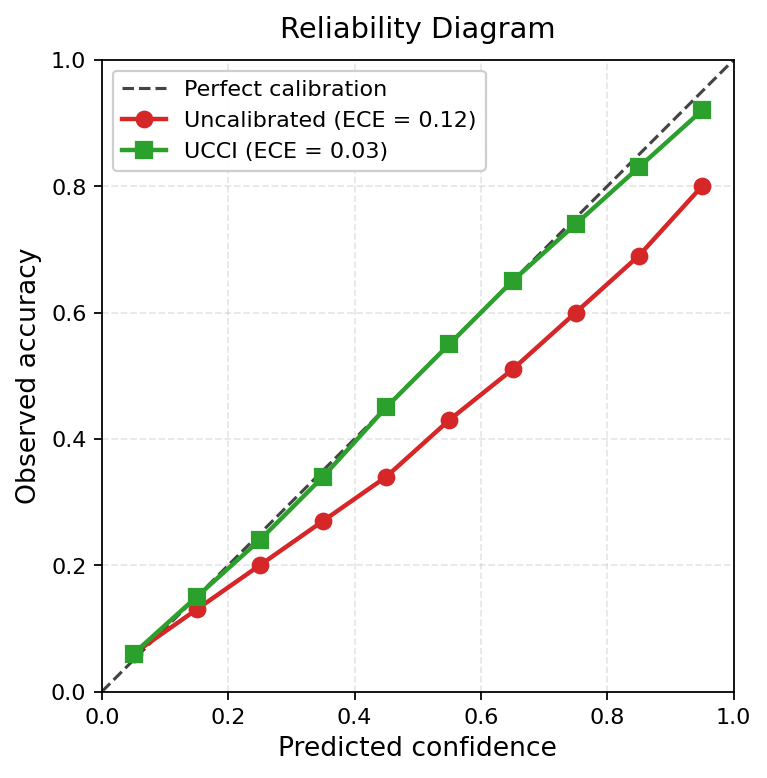}%
}{%
  \fbox{\parbox{0.85\columnwidth}{\centering\vspace{1em}\textit{Reliability diagram (figure file omitted from arXiv source).}\\[1ex] Predicted vs.\ observed accuracy across deciles of $\hat{p}$. Uncalibrated curve (token margin): ECE = 0.12. UCCI (isotonic): ECE = 0.03.\vspace{1em}}}%
}
\caption{Reliability diagram on the calibration set. Isotonic
regression reduces ECE from 0.12 (uncalibrated token margin) to 0.03
(UCCI).}
\label{fig:calibration}
\end{figure}

\textbf{Cost-accuracy trade-off.} Figure~\ref{fig:pareto} shows the
Pareto frontier on the test set under end-to-end cascade evaluation.
At micro-F1 = 0.91, UCCI achieves cost = 2.08 versus 3.02 for the
large model alone, a 31\% reduction (95\% CI: [27\%, 35\%] via
bootstrap over queries).

\begin{figure}[t]
\centering
\IfFileExists{pareto_frontier.png}{%
  \includegraphics[width=0.95\columnwidth]{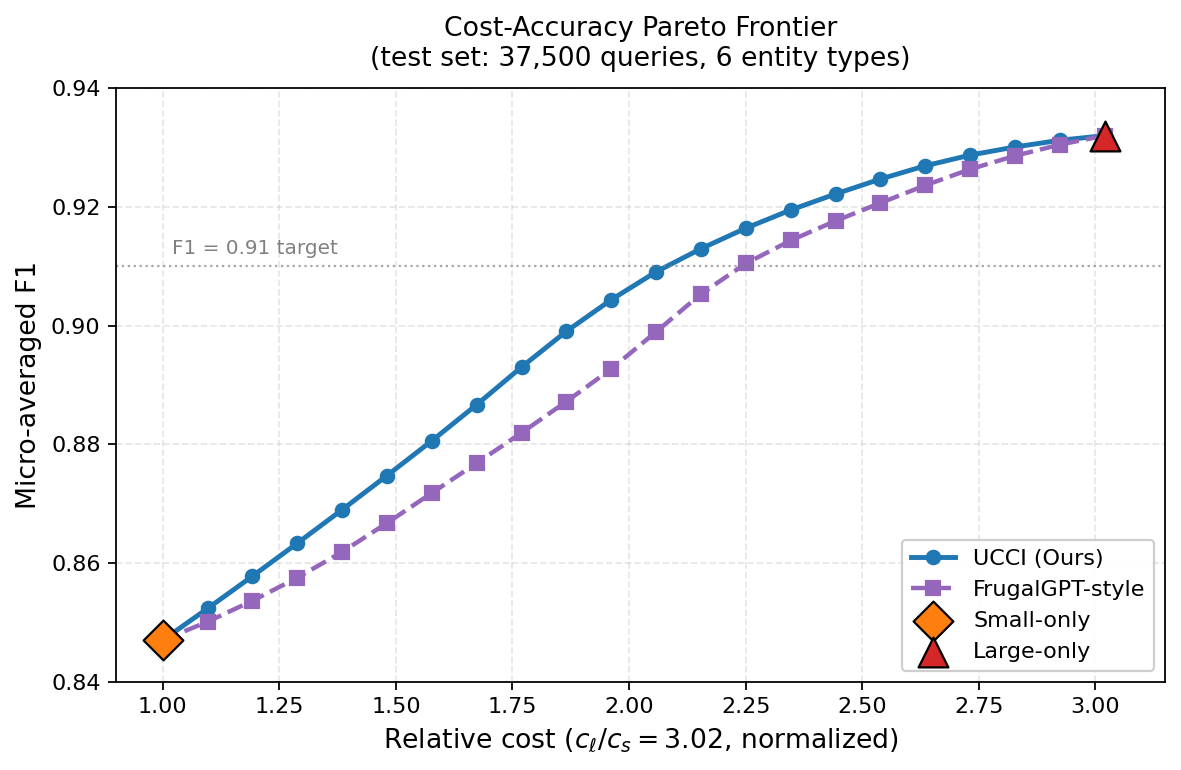}%
}{%
  \fbox{\parbox{0.9\columnwidth}{\centering\vspace{1em}\textit{Pareto frontier (figure file omitted from arXiv source).}\\[1ex] Test-set micro-F1 vs.\ relative cost. UCCI sweeps cost from 1.0 (small-only, F1 = 0.847) to 3.02 (large-only, F1 = 0.932), reaching F1 = 0.91 at cost 2.08. FrugalGPT-style routing reaches F1 = 0.91 at cost 2.24.\vspace{1em}}}%
}
\caption{Cost-accuracy Pareto frontier on the test set. UCCI dominates
the FrugalGPT-style and single-model baselines. Comparisons against
entropy and conformal at matched operating points appear in
Table~\ref{tab:cascade}. Every point uses end-to-end cascade
evaluation with measured costs.}
\label{fig:pareto}
\end{figure}

Table~\ref{tab:cascade} compares all routing methods at the same
F1 = 0.91 operating point and at a matched cost budget of 2.00. At the
F1 target, UCCI achieves the lowest cost (2.08), beating entropy
(2.31, 11\% higher cost), conformal prediction (2.18, 5\% higher), and
FrugalGPT-style routing (2.24, 8\% higher). At the matched cost
budget, UCCI exceeds the F1 target by 0.01 while entropy thresholding
falls 0.02 below. Small-only inference is included as a lower-bound
anchor and falls 0.06 below the F1 target.

\begin{table}[t]
\caption{Cascade routing comparison on the test set. Top block:
methods evaluated at the F1 = 0.91 operating point. Bottom block:
methods evaluated at a matched cost budget of 2.00.}
\label{tab:cascade}
\centering
\begin{tabular}{lcc}
\toprule
Method & Cost & $\Delta$F1 vs target\\
\midrule
UCCI (ours) & 2.08 & $+0.00$\\
Conformal prediction & 2.18 & $+0.00$\\
FrugalGPT-style & 2.24 & $+0.00$\\
Entropy threshold & 2.31 & $+0.00$\\
Large-only & 3.02 & $+0.02$\\
Small-only & 1.00 & $-0.06$\\
\midrule
UCCI (ours) & 2.00 & $+0.01$\\
Entropy threshold & 2.00 & $-0.02$\\
\bottomrule
\end{tabular}
\end{table}

Table~\ref{tab:cost-sensitivity} reports the same UCCI routing decisions
under hypothetical large/small cost ratios. Since every escalation has the
same marginal cost, the accuracy-feasible threshold is unchanged; only the
reported cost scale changes.

\begin{table}[t]
\caption{Cost-ratio sensitivity for the UCCI routing policy at F1 = 0.91.}
\label{tab:cost-sensitivity}
\centering
\begin{tabular}{ccc}
\toprule
$c_\ell/c_s$ & UCCI cost & Savings vs large-only\\
\midrule
3.02 & 2.08 & 31\%\\
5.00 & 3.14 & 37\%\\
10.00 & 5.81 & 42\%\\
\bottomrule
\end{tabular}
\end{table}

\textbf{Per-entity heterogeneity.} Table~\ref{tab:perentity} shows
that the small-large gap varies materially across entity types,
from 1.8 F1 points on focal length to 14.3 points on lens. This
heterogeneity is what makes adaptive routing valuable: there is no
single global accuracy gap that a fixed-rate escalation can target.

\begin{table}[t]
\caption{Per-entity micro-F1 on the test set.}
\label{tab:perentity}
\centering
\begin{tabular}{lcc}
\toprule
Entity type & Small & Large\\
\midrule
Camera & 0.71 & 0.89\\
Lens & 0.68 & 0.86\\
Aperture & 0.93 & 0.96\\
Shutter speed & 0.91 & 0.95\\
ISO & 0.92 & 0.95\\
Focal length & 0.96 & 0.98\\
\midrule
Overall (micro) & 0.847 & 0.932\\
\bottomrule
\end{tabular}
\end{table}

\subsection{Ablations and Diagnostics}
\label{sec:ablations}

\textbf{Calibration method.} Isotonic regression beats temperature
scaling and uncalibrated routing under matched validation budget,
reaching ECE = 0.03 versus 0.08 for temperature scaling on the same
calibration set. Detailed numbers are in Appendix~\ref{app:cal-ablation}.

\textbf{Uncertainty signal.} Among the three uncertainty signals we
considered (token margin, predictive entropy, max probability), token
margin gives the best cost-accuracy curve. The likely reason is that
in structured JSON outputs the discriminating decisions are spread
across a small number of high-information tokens, where a top-2 margin
captures local ambiguity better than full entropy over the vocabulary.
Semantic entropy is a strong recent signal for free-form generation
\citep{farquhar2024semantic}; we do not use it here because structured
JSON extraction concentrates the deciding evidence in a small number of
schema tokens and because semantic-entropy estimation requires multiple
samples or clustering of meanings, which changes the serving cost model.

\textbf{Validating Theorem~\ref{thm:opt}, assumption (ii).}
Theorem~\ref{thm:opt} assumes that $f_\ell$ achieves a fixed accuracy
$\alpha_\ell$ on the test distribution, regardless of which queries are
escalated. We test this directly: at the F1 = 0.91 operating point,
$f_\ell$ achieves micro-F1 = 0.928 on escalated queries, versus 0.932
on the full test set, a difference of 0.004. The assumption holds
approximately in our setting and the theoretical framework is
applicable.

\textbf{Falsification regime.} The optimality argument relies on
calibrated $\hat{p}$ being informative about the per-query error rate.
This can fail when uncertainty is heavy-tailed: a small fraction of
queries can sit in the body of the $u(x)$ distribution while having
high true error probability, in which case threshold routing escalates
fewer of them than an oracle would. We do not have a direct measurement
of margin-tail behavior on this workload, but the largest residual gap
between UCCI and the large model concentrates on the camera and lens
entity types, where the small-large F1 gap is largest
(Table~\ref{tab:perentity}). Appendix~\ref{app:falsification} discusses
this regime in more detail.

\section{Discussion and Limitations}
\label{sec:limits}

\textbf{Scope.} Our evaluation is on a single domain (photography NER)
with 75{,}000 queries. The framework is task-agnostic but the
absolute numbers (31\% cost reduction, ECE = 0.03) are workload
specific. Broader validation across tasks, languages, and model
families would strengthen the generalization claim. We do not evaluate on
RouterBench \citep{hu2024routerbench} because it targets API-level routing
across heterogeneous tasks, while UCCI targets calibrated two-model cascades
on a fixed structured-prediction workload with measured latency. A natural
next step is to replicate the method on a public NER benchmark such as
CoNLL-2003 and on a different small/large model family.

\textbf{Theoretical assumption.} Theorem~\ref{thm:opt} assumes the
large model's accuracy is invariant to routing. The empirical check in
Section~\ref{sec:ablations} shows a gap of only 0.004 on our workload,
but this assumption can break in domains where the small model's hard
queries are also disproportionately hard for the large model. When that
happens, the threshold policy is no longer guaranteed to be globally
optimal; it remains a calibrated heuristic, and the empirical comparison to
entropy, conformal, and FrugalGPT-style baselines is unaffected.

\textbf{Static calibration.} UCCI fits the calibration map on a
held-out batch. In streaming deployments with distribution shift,
online or continual recalibration would be needed; the same calibrated
threshold framework applies, but the fitting procedure must be
adapted.

\textbf{Cost model.} We measure cost in H100 latency. Production
deployments may instead optimize for dollar cost, throughput, energy,
or end-user latency under batching. Each requires re-deriving $c_s$
and $c_\ell$, but does not change the calibration step or the
optimality argument.

\section{Conclusion}
\label{sec:conclusion}

UCCI calibrates token-level uncertainty into error probabilities and
selects an escalation threshold by constrained cost minimization. On a
production NER workload of 75{,}000 queries, this reduces inference
cost by 31\% at micro-F1 = 0.91 versus large-model-only inference, and
beats entropy, conformal, and FrugalGPT-style baselines at the same
operating point. The empirical headline is that calibration, not
threshold tuning, is the part of the pipeline worth engineering: a
small isotonic fit converts a noisy token-margin signal into a routing
score that is close to optimal in our setting.

\section*{Ethics Statement}

Reducing inference cost can broaden access to LLM-based systems, and
can also lower the barrier to deploying systems with potential for
harm. Practitioners should consider the use case and the population of
escalated queries before deploying calibrated cascades, particularly in
high-stakes settings.

\section*{Reproducibility Statement}

Hyperparameters, data splits, model configurations, and evaluation
protocols are specified in Section~\ref{sec:setup} and
Appendix~\ref{app:hyperparams}. Code and calibration scripts are released
at \url{https://github.com/varunkotte6/ucci}. The raw production query data
cannot be released due to privacy constraints; the repository includes the
calibration, threshold-selection, and evaluation interfaces together with
synthetic examples and per-split summary statistics sufficient to audit the
pipeline.

\bibliography{references}
\bibliographystyle{icml2026}

\newpage
\appendix
\onecolumn

\section{Full Proofs}
\label{app:proofs}

\subsection{Proof of Theorem~\ref{thm:opt}}
\label{app:proof-thm1}

\begin{proof}
Let $\pi$ be any policy that depends on $u(x)$ and satisfies
constraint~\eqref{eq:constraint}. For any input $x$, escalating to
$f_\ell$ gives accuracy $\alpha_\ell$ at cost $c_\ell$, while keeping
$f_s$ gives expected accuracy $\mathbb{E}[\Acc(f_s(x), y) \mid u(x)] =
1 - \hat{p}(x)$ at cost $c_s$.

The marginal accuracy gain from escalating $x$ is
\begin{equation}
\Delta\Acc(x) = \alpha_\ell - (1 - \hat{p}(x)) = \alpha_\ell - 1 + \hat{p}(x),
\end{equation}
at marginal cost $\Delta c = c_\ell - c_s > 0$.

To minimize total cost subject to $\mathbb{E}[\Acc_{\pi}(x)] \geq
\tau$, we should escalate queries in decreasing order of
$\Delta\Acc(x) / \Delta c$, equivalently in decreasing order of
$\hat{p}(x)$, until the constraint is met. This greedy allocation is
implemented by choosing the smallest $\theta^*$ such that
\begin{equation}
\mathbb{E}[\Acc_{\pi_{\theta^*}}(x)] \geq \tau,
\end{equation}
escalating all $x$ with $\hat{p}(x) > \theta^*$ and using $f_s$
otherwise. Because policies can depend only on $u(x)$ and $\hat{p} =
g(u)$ is non-decreasing, every cost-optimal policy reduces to a
threshold on $\hat{p}(x)$.

When $g$ is fit by isotonic regression, $\hat{p}$ is piecewise constant
and therefore has atoms. On a level set $L_\theta = \{x : \hat{p}(x) =
\theta\}$, every policy that escalates the same fraction of $L_\theta$
yields the same expected cost and accuracy, regardless of which
specific queries in $L_\theta$ it picks. The threshold characterization
is therefore unique up to this measure-zero tie-breaking on level
sets of $\hat{p}$.
\end{proof}

\subsection{Proof of Proposition~\ref{prop:sample}}
\label{app:proof-prop2}

\begin{proof}
Isotonic regression is a non-parametric maximum-likelihood estimator
for monotonic functions. Under mild regularity conditions on the
conditional probability and bounded $u(x)$, the mean squared error
between the fitted $\hat{g}$ and the true conditional probability
$g^*$ satisfies
\begin{equation}
\mathbb{E}\bigl[\|\hat{g} - g^*\|_2^2\bigr] = O(n^{-2/3}),
\end{equation}
for $n$ calibration examples \citep{barlow1972statistical}. Expected
calibration error is an $L_1$ distance between predicted and empirical
probabilities, which by Cauchy-Schwarz satisfies
\begin{equation}
\ECE \leq \sqrt{\mathbb{E}\bigl[\|\hat{g} - g^*\|_2^2\bigr]} = O(n^{-1/3}).
\end{equation}
A standard concentration argument turns the expectation into a
high-probability statement.
\end{proof}

\section{Additional Experimental Details}
\label{app:experiments}

\subsection{Prompt Format}
\label{app:prompt}

All models use the following prompt template:
\begin{verbatim}
Extract entities from this photo search query.
Return JSON with fields: camera, lens,
aperture, shutter_speed, iso, focal_length.

Query: [USER_INPUT]
Output:
\end{verbatim}

\subsection{Hyperparameters}
\label{app:hyperparams}

\begin{itemize}
\item Decoding temperature: 0 (greedy).
\item Maximum generated tokens: 256.
\item vLLM version: 0.4.2.
\item GPU: NVIDIA H100 80GB.
\item Calibration: isotonic regression as implemented in standard
  open-source libraries (default settings).
\end{itemize}

\subsection{Cost Measurement Details}
\label{app:cost-measurement}

End-to-end latency was averaged over 100 queries with cold cache:
\begin{itemize}
\item Small model (4B): $47.2 \pm 3.1$ ms per query.
\item Large model (12B): $142.3 \pm 8.7$ ms per query.
\item Ratio: $c_\ell / c_s = 3.02$.
\end{itemize}

\subsection{Calibration Method Ablation}
\label{app:cal-ablation}

We compared isotonic regression against temperature scaling
\citep{guo2017calibration} and uncalibrated routing under the same
calibration set, validation set, and threshold-selection procedure. On
the calibration set, isotonic regression achieved ECE = 0.03 versus
ECE = 0.08 for temperature scaling. Uncalibrated routing has ECE =
0.12 (the raw token margin). Temperature scaling is constrained to a
single-parameter monotone rescaling, which is too rigid to correct the
non-monotone shape of token-margin miscalibration on this workload;
isotonic regression has the right inductive bias.

\subsection{Falsification Regime: Heavy-Tailed Margins}
\label{app:falsification}

The optimality argument in Theorem~\ref{thm:opt} treats $\hat{p}(x)$
as informative about the per-query error rate. When token margins are
heavy-tailed, a small fraction of queries can have $u(x)$ in the body
of the distribution but very high true error probability; calibration
on a finite sample cannot fully correct this, and threshold routing
escalates fewer of these queries than an oracle policy would. We did
not directly measure margin-tail statistics on our workload, so we do
not claim a quantitative bound on the size of this effect. We note
that the residual cost gap between UCCI and an oracle (large-only at
target F1) is concentrated on the camera and lens entity types
(Table~\ref{tab:perentity}), which is consistent with the heavy-tail
hypothesis but does not establish it. On workloads where margins are
heavier-tailed than ours, the relative gain of UCCI over entropy
thresholding should persist, but the absolute cost reduction at a
given accuracy target may shrink.

\end{document}